\def\eqref#1{equation~\ref{#1}}
\def\1{\bm{1}}
\newcommand{\tcen}[1]{\multicolumn{1}{c}{#1}}
\definecolor{royalblue}{rgb}{0.06, 0.75, 0.99}
\newcommand{\norm}[1]{\left\Vert#1\right\Vert}
\newcommand*{\eg}{{\it e.g.}\@\xspace}
\newcommand*{\ie}{{\it i.e.}\@\xspace}
 \newtheorem{theorem}{Theorem}
 \newtheorem{lemma}{Lemma}
 \newtheorem{proposition}{Proposition}
\definecolor{mygray}{gray}{0.95}
\newcommand{\greybox}[1]{%
\vspace{-1.4em}
\begin{center}			
\vspace{-0.5em}
\colorbox{mygray} {		
\begin{minipage}{0.987\linewidth} 	
\centering
\vspace{-0.8em}
{#1}
\end{minipage}}			
\end{center}
\vspace{-0.4em}%
}
\def\eqref#1{\ref{#1}}
\def\1{\bm{1}}
\DeclareMathAlphabet{\mathsfit}{\encodingdefault}{\sfdefault}{m}{sl}
\SetMathAlphabet{\mathsfit}{bold}{\encodingdefault}{\sfdefault}{bx}{n}
\newcommand{\E}{\mathbb{E}}
\newcommand{\R}{\mathbb{R}}
\newcommand{\sigmoid}{\sigma}
\definecolor{linkcolor}{RGB}{74, 102, 146}
\newcommand{\at}{a_t^\theta}
\newcommand{\bt}{b_t^\theta}
\newcommand{\ft}{f_t^\theta}
\newcommand{\diff}{g_t}
\newcommand{\methodname}{NCL++\@\xspace}
\title{Simulation-Free Differential Dynamics \\through Neural Conservation Laws}
\author[1,2]{Mengjian Hua}
\author[2]{Eric Vanden-Eijnden}
\author[3]{Ricky T.Q. Chen}
\affil[1]{%
NYU Shanghai
}
\affil[2]{%
Courant Institute of Mathematical Sciences, New York University
}
\affil[3]{%
FAIR at Meta
  }
\begin{document}
\maketitle

\begin{abstract}
We present a novel simulation-free framework for training continuous-time diffusion processes over very general objective functions. 
Existing methods typically involve either prescribing the optimal diffusion process---which only works for heavily restricted problem formulations---or require expensive simulation to numerically obtain the time-dependent densities and sample from the diffusion process.
In contrast, we propose a coupled parameterization which jointly models a time-dependent density function, or probability path, and the dynamics of a diffusion process that generates this probability path.
To accomplish this, our approach directly bakes in the Fokker-Planck equation and density function requirements as hard constraints, by extending and greatly simplifying the construction of Neural Conservation Laws.
This enables simulation-free training for a large variety of problem formulations, from data-driven objectives as in generative modeling and dynamical optimal transport, to optimality-based objectives as in stochastic optimal control, with straightforward extensions to mean-field objectives due to the ease of accessing exact density functions. We validate our method in a diverse range of application domains from modeling spatio-temporal events to learning optimal dynamics from population data.
\end{abstract}

\section{Introduction}

Diffusion models have been widely adopted due to their ease of use and competitive performance in generative modeling \citet{ho2020denoisingdiffusionprobabilisticmodels, ma2024sitexploringflowdiffusionbased, chen2024flowmatchinggeneralgeometries}, by learning a diffusion process that interpolates between a data distribution and a Gaussian noise distribution \citet{song2021scorebasedgenerativemodelingstochastic, albergo2023stochasticinterpolantsunifyingframework, lipman2023flowmatchinggenerativemodeling}. 
However, their construction is heavily restrictive and only results in a simulation-free training algorithm for this simplest case. 
Recent works have adapted these ideas to train diffusion processes over more general objective functions, such as solving optimal transport or generalized Schr\"odinger bridge problems, but these methods all require simulating from the learned diffusion process to some varying degrees, and are generally more restrictive than simulation-based training approaches \citet{liu2024generalizedschrodingerbridgematching}.

We consider training diffusion processes over general objective functions%
\footnote{We denote $\partial_t \rho_t= \tfrac{\partial \rho_t}{\partial t}$, $\nabla \cdot (u_t\rho_t) = \sum_{d=1}^{D} \tfrac{\partial(u_t\rho_t)}{\partial x_d}$, and  $\Delta \rho_t= \sum_{d=1}^{D} \tfrac{\partial^2 \rho_t}{\partial x^2_d}$.}
\begin{align}
    \label{eq:problem_formulation}
    \min_{\rho, u} \;\;& \int_0^1 L(\rho_t, u_t) dt + F(\rho_0,\rho_1) \\
    \label{eq:fp_eq}
    \text{ s.t. }  \;& \partial_t \rho_t = - \nabla \cdot (u_t \rho_t) + \tfrac12\diff^2 \Delta \rho_t \\
    \label{eq:density_constraints}
    \;& \rho_t \geq 0,\quad  \int_{\R^D} \rho_t(x) dx = 1 \quad \forall t\in [0,1]
\end{align}
where $u_t(x): \R^{1+D} \rightarrow \R^D$ and $\rho_t(x): \R^{1+D} \rightarrow \R^+$ are the time-dependent velocity field and probability density function to be learned, and $g_t$ is a state-independent volatility that is given as part of the problem.
The functionals $L$ and $F$ can be quite general, including cases such as generative modeling from data observations, Schr\"odinger bridge problems, and mean-field control---we provide concrete examples in \Cref{sec:experiments}.
The constraints in \cref{eq:density_constraints} ensure the density function is properly normalized, while the constraint in \cref{eq:fp_eq}---the Fokker-Planck equation---implies that the diffusion process modeled by the stochastic differential equation (SDE)
\begin{equation}\label{eq:sde}
    d X_t = u_t(X_t) dt + \diff dW_t
\end{equation}
which transports particles in accordance with the marginal densities, \ie $X_t \sim \rho_t$.

Typical approaches will only directly parameterize $u_t$, the time-evolution of the particles, whereas $\rho_t$, the time-evolution of probability density function, is either unobtainable or only estimated through expensive numerical procedures~\citep{chen2019neuralordinarydifferentialequations,Kobyzev_2021}. As such, in order to sample from $\rho_t$, typically one transports particles starting from the initial time to time $t$, \textit{simulating} the diffusion process in \cref{eq:sde}.

In this work, we propose a novel parameterization of diffusion processes where we parameterize not only the dynamics $u_t$ but also the density $\rho_t$ in an explicit form, then we directly impose the Fokker-Planck equation (\ref{eq:fp_eq}) as a hard constraint on the model in order to couple these two quantities. 
To do so, we build upon ideas from Neural Conservation Laws (NCL; \citet{richter2022neural}) for imposing the continuity equation. 
We propose a reformulation of the NCL framework and significantly improve upon its prior construction; unlike prior work, we additionally include the \emph{density constraints} (\ref{eq:density_constraints}) into the model, enabling maximum likelihood training. 
We also find that the na\"ive construction introduces what we call a \emph{spurious flux phenomenon} which renders the velocity field unusable. We propose removing this phenomenon through the introduction of a carefully designed divergence-free component into the dynamics model that leaves the density invariant. In summary, our work introduces the following \textbf{main contributions}:
\begin{itemize}[leftmargin=0.15in,noitemsep,topsep=0pt]
    \item We give an improved analysis of the Neural Conservation Laws construction, generalizing it to diffusion processes and additionally imposing the density constraints (\ref{eq:density_constraints}). Compared to the original formulation, we can now train with the maximum likelihood objective.
    \item We discuss how the na\"ive construction leads to a spurious flux phenomenon, where the flux and velocity field do not vanish even as $x$ diverges. 
    We mitigate this problem by introducing carefully chosen divergence-free components to the flux that leaves the density invariant.
    \item We show that our method achieves state-of-the-art on a variety of spatio-temporal generative modeling data sets and on learning transport maps in cellular dynamics.
\end{itemize}
To the best of our knowledge, our method is the first to allow the training of a diffusion process with general objective functions---such as regularizing towards optimal transport, or with additional state costs, including mean-field cost functions---completely simulation-free, whereas existing methods require varying degrees of simulation.

\section{Related Work}

Markov processes described by ordinary and stochastic differential equations have been used across many application domains \citep{rubanova2019latent,karniadakis2021physics,cuomo2022scientific,wang2023scientific}, with the most general problem settings involving simulation-based methods. This refers to training neural differential equations of various kinds by simulating their trajectories and differentiating through the objective function. While some works have solved the memory issue with dfferentating through simulations \citep{chen2020neural,li2020scalable,chen2020learning}, it remains problematic to apply these at scale due to the computational cost of simulation. 
Furthermore, many probabilistic modeling applications \citep{grathwohl2018ffjord,chen2019residual,koshizuka2022neural} require the computation of the likelihood for maximum likelihood training, which can be more expensive than simulating trajectories.

This is where Neural Conservation Laws (NCL; \citet{richter2022neural}) come in, which is a modeling paradigm where the law of conservation such as \cref{eq:fp_eq} is directly enforced as a hard constraint. This allows optimization problems like~\cref{eq:problem_formulation} to be mapped on an unconstrained problem in the parameter space of an NCL model.
However, while the original NCL model \citep{richter2022neural} was able to bake in the constraint in \cref{eq:fp_eq}, they did not provide a scalable way to incorporate the density constraints in \cref{eq:density_constraints} which is key for enabling maximum likelihood training. 

An alternative framework is that of neural flows \citep{bilovs2021neural} which parameterize a flow model using a time-dependent normalizing flow \citep{papamakarios2021normalizing} instead of the velocity field. 
This approach avoids numerical simulation of ODEs but constraints such as invertibility must be enforced on the neural architecture and therefore limits its expressiveness. 

There are highly-scalable frameworks of diffusion models \citep{ho2020denoisingdiffusionprobabilisticmodels,song2021scorebasedgenerativemodelingstochastic}, inlcuding Flow Matching \citep{lipman2023flowmatchinggenerativemodeling}, and stochastic interpolants \citep{albergo2022building,albergo2023stochasticinterpolantsunifyingframework}. However, these methods can only solve a restricted set of problems, where samples from the optimal $\rho_0$ and $\rho_1$ are provided for training. They cannot handle the general problem setup of \cref{eq:problem_formulation} but instead directly prescribe the optimal solution which is then learned by a regression problem. 


\section{Method} 

We describe a novel framework which directly parameterizes both a velocity field $u_t$ and a density $\rho_t$ that always satisfies the Fokker-Planck constraint in \cref{eq:fp_eq} and density constraints in \cref{eq:density_constraints}. Our method is built on top of ideas introduced in Neural Conservation Laws (NCL; \citet{richter2022neural}) through the use of differential forms, but we take an alternative construction while providing step-by-step derivations.
We then discuss how likelihood-based generative models can fit within our framework. The na\"ive construction, however, leads to a problem we call the \textit{spurious flux phenomenon} (\Cref{sec:spurious_flux}) which we resolve by introducing a divergence-free component (\Cref{sec:bt_cancellation}).

\subsection{Neural Conservation Laws}\label{sec:ncl}

In order to satisfy the Fokker-Planck constraint in \cref{eq:fp_eq}, we make use of a coupled parameterization of both a \emph{probability path} $\rho_t$, \ie a time-dependent density function, and a \emph{flux} $j_t(x) : \R^{D+1}\to\R^D$ that is designed, by construction, to always satisfy the continuity equation,
\begin{equation}\label{eq:continuity_eq}
    \partial_t \rho_t + \nabla \cdot j_t =0.
\end{equation}
This equation imposes the condition that the total probability in a system must be conserved, and that instantaneous changes in the probability can only be attributed to the local movement of particles following a continuous flow characterized together by $j_t$ and $\rho_t$. 

We directly impose the continuity equation into the model as a hard constraint. This idea was previously explored in Neural Conservation Laws (NCL; \citet{richter2022neural}); however, its reliance on differential forms makes it difficult to extend, and they were not able to satisfy the density constraints in \cref{eq:density_constraints}. Instead, we propose a simplified alternative construction and derive the core building blocks of NCL that are necessary for our approach following only basic principles.

To model \cref{eq:continuity_eq}, we introduce two vector fields $\at(x):\R^{1+D} \rightarrow \R^D$ and $\bt(x):\R^{1+D} \rightarrow \R^D$ with parameters $\theta$, and set
\greybox{
\begin{align}
    \label{eq:rho_parameterization}
    \rho_t &= \nabla \cdot \at, \\
    \label{eq:flux_parameterization}
    j_t &= -\partial_t \at + \bt.
\end{align}}
\vspace{-0.5em}
With this choice we have:
\begin{lemma}
\label{lemma:continuity}
    Let $\rho_t$ and $j_t$ be given by \cref{eq:rho_parameterization} and \cref{eq:flux_parameterization}, respectively. Then the continuity \cref{eq:continuity_eq} holds \text{iff} $b_t$ is divergence-free, i.e. $\nabla \cdot \bt = 0$.
\end{lemma}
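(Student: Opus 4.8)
The plan is to verify the claim by direct substitution of the parameterization (\ref{eq:rho_parameterization})--(\ref{eq:flux_parameterization}) into the continuity equation (\ref{eq:continuity_eq}) and simplifying. First I would compute $\partial_t \rho_t = \partial_t (\nabla \cdot \at) = \nabla \cdot (\partial_t \at)$, using that partial derivatives in $t$ and in the spatial coordinates commute (assuming $\at$ is sufficiently smooth, say $C^2$). Next I would compute $\nabla \cdot j_t = \nabla \cdot(-\partial_t \at + \bt) = -\nabla \cdot (\partial_t \at) + \nabla \cdot \bt$. Adding the two expressions, the terms $\nabla \cdot(\partial_t \at)$ cancel exactly, leaving
\begin{equation}
    \partial_t \rho_t + \nabla \cdot j_t = \nabla \cdot \bt .
\end{equation}

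From this identity both directions of the ``iff'' are immediate: if $\nabla \cdot \bt = 0$ then the left-hand side vanishes, so the continuity equation holds; conversely, if the continuity equation holds then $\nabla \cdot \bt = 0$ pointwise. I would state this cleanly as a one-line chain of equalities rather than splitting into two separate implications, since the equivalence is really just the observation that the right-hand side of the displayed identity is exactly $\nabla \cdot \bt$.

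The only subtlety worth flagging is the regularity/interchange-of-derivatives step: $\partial_t \nabla \cdot \at = \nabla \cdot \partial_t \at$ requires $\at(x)$ to be jointly $C^2$ in $(t,x)$ (or at least that the mixed second partials exist and are continuous, so Clairaut/Schwarz applies). In the paper's setting $\at$ is a neural network with smooth activations, so this is automatic, but I would note the assumption explicitly. There is no real obstacle here — the lemma is essentially a bookkeeping identity, and the ``main'' content is simply recognizing that the ansatz (\ref{eq:flux_parameterization}) was designed so that the $\partial_t \at$ contribution to $\partial_t \rho_t$ is cancelled by the corresponding contribution to $\nabla \cdot j_t$.
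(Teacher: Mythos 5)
Your proof is correct and matches the paper's own argument: substitute the parameterizations~(\ref{eq:rho_parameterization})--(\ref{eq:flux_parameterization}), cancel the mixed-partial terms (using the Clairaut/Schwarz interchange, which you sensibly flag), and observe that what remains is exactly $\nabla\cdot\bt$. As a small aside, the paper's displayed computation has a sign slip (it writes $-\nabla\cdot(\partial_t \at + \bt)$ and arrives at $-\nabla\cdot\bt$), whereas your $+\nabla\cdot\bt$ is the correct sign for the stated parameterization $j_t = -\partial_t \at + \bt$; the ``iff'' conclusion is unaffected either way.
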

\begin{proof}
    Plugging \cref{eq:rho_parameterization} and \cref{eq:flux_parameterization} into the left hand side of \cref{eq:continuity_eq},
\begin{equation}
    \partial_t \rho_t + \nabla \cdot j_t = \partial_t \nabla \cdot a_t^\theta - \nabla \cdot (\partial_t \at + \bt) = - \nabla \cdot b^\theta_t.
\end{equation}
Therefore \cref{eq:continuity_eq} holds \text{iff} $\nabla \cdot \bt = 0$, which verifies the claim.
\end{proof}
Notice that $\rho_t$ depends only on $a^\theta_t$, while $j_t$ is affected by both $a_t^\theta$ and $b^\theta_t$. The extra degrees of freedom coming from~$b_t^\theta$ will be important in order to resolve what we call the \emph{spurious flux phenomenon} in \Cref{sec:spurious_flux}, and furthermore, it provides the needed flexibility in order to learn optimal solutions of~$u_t$ while leaving~$\rho_t$ invariant.

\subsection{Conversion to differential dynamics} \label{sec:sde_conversion}

In order to obtain the dynamics directly, we need to convert the continuity equation into the Fokker-Planck equation. Fortunately, the density and flux provide sufficient information in order to perform this conversion.
Any flux $j_t$ that satisfies the continuity equation in \cref{eq:continuity_eq} can be converted to a $u_t$ that satisfies \cref{eq:fp_eq} using the following identity:
\vspace{-0.9em}
\begin{center}			
\vspace{-0.5em}
\colorbox{mygray} {		
\begin{minipage}{0.987\linewidth} 	
\centering
\vspace{-0.8em}   
\begin{equation}\label{eq:drift_from_flux}
    u_t = \frac{j_t}{\rho_t} + \tfrac12\diff^2\nabla \log \rho_t.
\end{equation}
\end{minipage}}			
\end{center}
\vspace{-0.5em}
This can be verified by plugging \cref{eq:drift_from_flux} into \cref{eq:fp_eq}.
\begin{equation}
\begin{split}
    \partial_t \rho_t& =-\nabla \cdot \left(\rho_t \left( \frac{j_t}{\rho_t} + \tfrac12\diff^2\nabla \log \rho_t \right)\right) + \tfrac12\diff^2 \Delta \rho_t(x)\\
    &= - \nabla \cdot \left( j_t + \tfrac12\diff^2\nabla \rho_t \right) + \tfrac12\diff^2 \Delta \rho_t(x)
    = - \nabla \cdot j_t 
\end{split}
\end{equation}
Thus, by parameterizing a single vector field $\at$, we can model both a density $\rho_t$ and a velocity field $u_t$ that satisfies the constraint in \cref{eq:fp_eq}. This allows us to turn the constrained optimization problem in \cref{eq:problem_formulation} into an unconstrained optimization in the parameters $\theta$. Furthermore, as we are given direct access to $\rho_t$, we do not need to solve the Fokker-Planck equation (\cref{eq:fp_eq}) from $u_t$, typically requiring an extremely expensive procedure. This enables a new paradigm of simulation-free methods for training diffusion models over general objective functions.

\subsection{Designing $\at$ through likelihood-based models}

In order to model valid probability density functions, we must also satisfy the density constraints in \cref{eq:density_constraints}. 
In addition, we wish to design our choice of $\at$ such that: (i) $\rho_t$ can be exactly sampled from at any time value $t$, (ii) computation of $\rho_t$ incurs minimal computational cost, and (iii) the model is flexible enough for practical applications. We will show that autoregressive likelihood-based models nicely fit within our framework and satisfies all of the above desirables.

Consider a time-dependent \emph{autoregressive probabilistic model} which decomposes the joint distribution over all $D$ variables given the natural ordering, 
\begin{equation}
\label{eq:auto:reg}\textstyle
    \rho_t(x) = \prod_{i=1}^D f^\theta_t(x_{i} | x_{1:i-1}),
\end{equation}
and denote by $F^\theta_t(x_{i} | x_{1:i-1}) = \int_{-\infty}^{x_{i}} f^\theta_t(y | x_{1:i-1}) dy$ the associated cumulative probability distributions (CDF).
Letting $[\at]_{i}$ denote the $i$-th coordinate of $\at$,  this model can be constructed by setting 
\vspace{-0.9em}
\begin{center}			
\vspace{-0.5em}
\colorbox{mygray} {		
\begin{minipage}{0.987\linewidth} 	
\centering
\vspace{-0.8em}   
\begin{equation}\label{eq:autoregressive_at}
     \begin{aligned}
    [\at]_i(x) &=0, \quad\quad\qquad \text{if } \ i \not= D\\
    [\at]_D(x) &=
    F^\theta_t (x_{D}| x_{1:D-1}) \prod_{j <D} f^\theta_t(x_{j}|x_{1:j-1}),
 \end{aligned}
\end{equation}
\end{minipage}}			
\end{center}
\vspace{-0.5em}
This gives $\rho_t(x) = \nabla \cdot \at(x) = \prod_{i=1}^{D} f^\theta_t(x_i| x_{1:i-1})$, recovering eq.~(\ref{eq:auto:reg}) 

Alternatively, we can consider a factorized model, where $f$ is a parameterized probability density function (PDF) and $x_i$ does not depend on other variables:
\begin{align}\label{eq:factorized_model}
    \rho_t(x) = \prod_{i=1}^{D}  f^\theta_t (x_i),
\end{align}
which we will refer to as the {\it factorized model}. 

\paragraph{Choice of $F^\theta_t$ as mixture of logistics.} While one may directly parameterize the functions $F^\theta_t$ using monotonic neural networks \citep{sill1997monotonic,daniels2010monotone}, resulting in a universal density approximator, we decide to use a simpler construction using mixture of logistic distributions. 
Mixture of logistics has been a common choice among likelihood-based generative modeling frameworks, from normalizing flows \citep{kingma2016improved,ho2019flowpp} to autoregressive models \citep{salimans2017pixelcnnpp}. Similarly, mixture of logistics is sufficient flexible for our use cases, as we only need to model per-coordinate conditional distributions. For our autoregressive model, we use a mixture of logistics to describes the CDF as
\begin{equation}\label{eq:autoregressive_mixture}
\begin{aligned}
    \textstyle
	&F^\theta_t(x_{i} | x_{1:i-1}) = \sum_{l=1}^L \alpha^{\theta}_{l}(x_{1:i-1},t)  \sigmoid\left( z^\theta_l(x_{i},x_{1:i-1},t ) \right),
\end{aligned}
\end{equation}
where $\sigmoid(x) = 1/(1+ \exp(-x))$ is the sigmoid function and we defined
\begin{equation}
    z^\theta_l(x_{i},x_{1:i-1},t ) = s^{\theta}_{l}(x_{1:i-1},t) \left(x_{i} - \mu^{\theta}_{l}(x_{1:i-1},t)\right).
\end{equation}
Here, $\mu^{\theta}_{l}(x_{1:i-1},t)$ and $s^{\theta}_{l}(x_{1:i-1},t)$ correspond to the mean and inverse scale of a logistic distribution, respectively, while $\alpha^{\theta}_{l}(x_{1:i-1},t)$ are mixture weights. All functions are parameterized using autoregressive neural networks. These correspond to probability density functions
\begin{equation}
\begin{aligned}
    \textstyle
	& f^\theta_t(x_{i}| x_{1:i-1}) =   \sum_{l=1}^L \alpha^{\theta}_{l}(x_{1:i-1},t) \left[
	s^{\theta}_{l}(x_{1:i-1},t) \right. \\ &\quad \times \left. \sigmoid \left( z^\theta_l (x_{i},x_{1:i-1},t )\right) \sigmoid \left( -z_l (x_{i},x_{1:i-1},t )\right) 
	\right],
\end{aligned}
\end{equation}
As for the factorized model, since we remove all the dependecies of the CDF on the prior coordiates, we can define $F^\theta_t(x_{i})$ via 
\begin{equation}\textstyle
	F^\theta_t(x_{i}) = \sum_{l=1}^L \alpha^{\theta}_{l}(t) \left[ \sigmoid\left( s^{\theta}_{l}(t)  \left(x_{i} - \mu^{\theta}_{l}(t)\right) \right) \right],
\end{equation}
where the mean, inverse scale, and the mixture weights are functions depend on $t$ only. 

While these constructions for the factorized model and the autoregressive lead to a proper density, the keen reader may notice that the flux constructed from \cref{eq:flux_parameterization} using this $\at$ is problematic as the flux will be exactly zero in all but one coordinate. This is not the core of the problem but rather a manifestation of the spurious flux phenonmenon, which we will describe in \Cref{sec:spurious_flux}. We will later go in depth on how to construct a proper flux by making use of the extra degree of freedom we have in designing $\bt$ later in \Cref{sec:bt_cancellation}. 

\subsection{The spurious flux phenomenon}\label{sec:spurious_flux}

The choice of $\at$ above guarantees that $\rho_t$ is positive and normalizes to exactly one. However, using only this $\at$ and setting $\bt=0$ in \cref{eq:flux_parameterization}  turns out to be problematic for the flux $j_t$. Indeed, given any box-shaped region $\mathcal{X} = [-L,L]^D$ with boundary denoted $\partial \mathcal{X}$ and normal vector $\hat n(x)$, we can use the divergence theorem to obtain
\begin{equation}
\begin{aligned}
    \int_{\mathcal{X}} \rho_t(x) dx &= \int_{\mathcal{X}}  \nabla \cdot \at(x) dx\\
    &= \int_{\partial \mathcal{X}} \hat n(x)\cdot a_t^\theta(x) dS(x) > 0.
\end{aligned}
\end{equation}
where $dS(x)$ is the surface measure on $\partial \mathcal{X}$. 
This quantity is nonzero no matter how large $L$ is, and approaches one as $L \rightarrow \infty$ since $\int_{\R^D} \rho_t(x) dx =1$.

This implies that $\at$ is necessarily nonzero somewhere even outside of the support of $\rho_t$.
Therefore, if we set $b^\theta_t=0$ so that $j_t = - \partial_t \at$ by \cref{eq:flux_parameterization}, since $\at$ is not constant in $t$ in general, the flux does not decay to zero even outside the support of $\rho_t$, even though $\rho_t$ goes to zero. 
This is problematic for two reasons: (i) we take $j_t/\rho_t$ in order to construct $u_t$, which will diverge, and (ii) because the divergence theorem holds for \textit{any} region, this introduces unwanted behavior even at finite $x$, as can be seen in Figure~\ref{fig:flux_illustration}. 

\begin{figure*}[t]
    \centering
\includegraphics[width=0.9\linewidth]{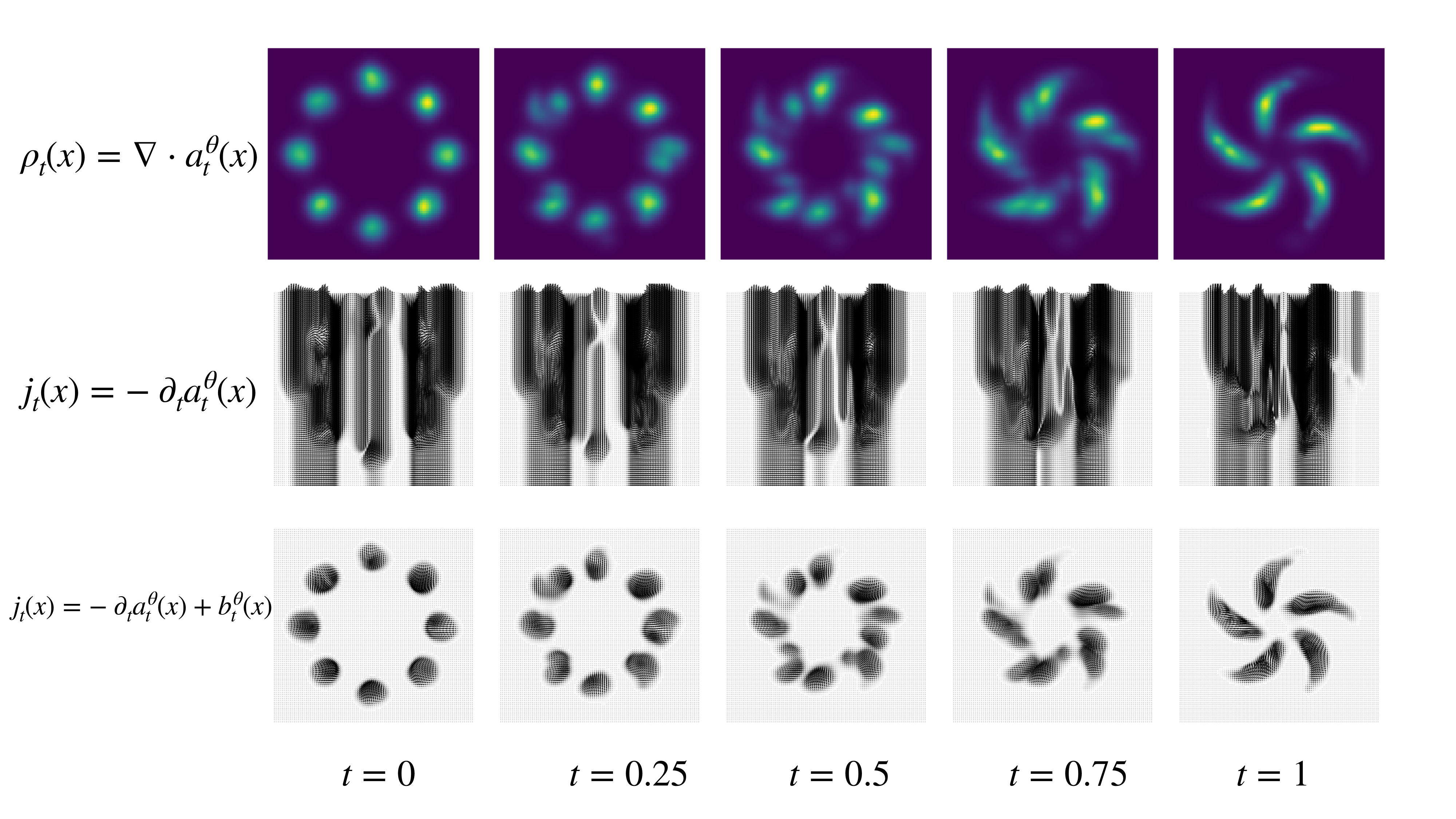}
    \caption{Illustration of the \textit{spurious flux phenomenon} and its removal with a divergence-free vector field $\bt$. 
    (\textit{top}) The trained marginal distributions in 2D. 
    (\textit{middle}) The flux field $j_t = -\partial_t \at$ without any flux cancellations, where we see there are spurious fluxes. 
    (\textit{bottom}) The flux field $j_t = -\partial_t \at + \bt$ with $\bt$ defined in \Cref{sec:bt_cancellation}, and we now see that the flux field vanishes properly. }
    \label{fig:flux_illustration}
\end{figure*}

This \textit{spurious flux phenomenon}, where the flux is nonzero even as $x$ diverges, exists generally for any construction that does not enforce $\lim_{x\to\infty} \partial_t \at=0$, and it can be easily formalized in the case of the autoregressive construction of $\at$  in \cref{eq:autoregressive_at}, as summarized by the following result:
\begin{lemma}
Let $\rho_t(x) = \nabla \cdot \at(x) = \prod_{i=1}^D f^\theta_t(x_{i})$ and $j_t(x) = -\partial_t \at(x)$ with $a^\theta_t(x)$ given by \cref{eq:autoregressive_at}. Then $\lim_{x_{D} \to + \infty} | j_t (x) |^2 \not = 0$.
\end{lemma}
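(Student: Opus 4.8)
The plan is to exploit the fact that the construction \cref{eq:autoregressive_at} makes $\at$ vanish in every coordinate except the $D$-th, so that $j_t = -\partial_t\at$ likewise has only one nonzero component and $|j_t(x)|^2 = \big(\partial_t[\at]_D(x)\big)^2$; the whole statement then collapses to a scalar limit computation. First I would write $[\at]_D(x) = F^\theta_t(x_D\mid x_{1:D-1})\prod_{j<D} f^\theta_t(x_j\mid x_{1:j-1})$ (which reads $F^\theta_t(x_D)\prod_{j<D} f^\theta_t(x_j)$ in the factorized case of the statement), differentiate in $t$ by the product rule, and then pass to the limit $x_D\to+\infty$ in the resulting expression.

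The second step is to evaluate that limit using the explicit mixture-of-logistics form $F^\theta_t = \sum_{l=1}^L \alpha^\theta_l(t)\,\sigmoid\!\big(s^\theta_l(t)(x_D-\mu^\theta_l(t))\big)$ with $\sum_l \alpha^\theta_l(t)\equiv 1$: as $x_D\to+\infty$ each $\sigmoid(\cdot)\to 1$, so $F^\theta_t(x_D)\to 1$, while $\partial_t F^\theta_t(x_D)$ is a finite sum whose ``$\dot\alpha^\theta_l\,\sigmoid(\cdot)$'' terms tend to $\dot\alpha^\theta_l$ and whose ``$\alpha^\theta_l\,\sigmoid'(\cdot)(\cdots)$'' terms vanish because $\sigmoid'$ decays exponentially and absorbs the linear growth in $x_D$; hence $\partial_t F^\theta_t(x_D)\to\sum_l\dot\alpha^\theta_l(t)=\partial_t\!\sum_l\alpha^\theta_l(t)=0$. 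This both gives the limiting value and legitimizes exchanging $\lim_{x_D\to\infty}$ with $\partial_t$. The outcome is
\begin{equation}
\lim_{x_D\to+\infty}|j_t(x)|^2 = \Big|\partial_t\!\prod_{j<D} f^\theta_t(x_j\mid x_{1:j-1})\Big|^2,
\end{equation}
i.e. the squared $t$-derivative of the marginal density of the first $D-1$ coordinates under the model.

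The last step is to observe that this quantity is strictly positive for any model whose first-$(D{-}1)$-coordinate marginal is not frozen in time at $x_{1:D-1}$ — the generic situation — so I would state this nondegeneracy hypothesis explicitly rather than leave ``$\neq 0$'' implicit. The main obstacle is precisely this genericity bookkeeping: one must acknowledge the degenerate exceptions (parameters constant in $t$, or an accidentally stationary marginal), and one may remark that since the limit above is independent of $x_D$, it is nonzero on an open dense set of $(x_{1:D-1},t)$ for any nonconstant parameterization. The autoregressive case needs no separate argument — every step carries through verbatim with the conditioning $\cdot\mid x_{1:j-1}$ retained throughout.
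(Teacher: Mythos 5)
Your proof follows essentially the same route as the paper's: isolate the $D$-th coordinate of $j_t=-\partial_t\at$, apply the product rule, and pass to the limit $x_D\to+\infty$ using $F^\theta_t\to 1$ and $\partial_t F^\theta_t\to 0$, leaving $\lim_{x_D\to\infty}[j_t]_D = -\partial_t\prod_{i<D} f^\theta_t(x_i\mid x_{1:i-1})$. Your additions — the explicit mixture-of-logistics computation justifying $\partial_t F^\theta_t\to 0$ and the exchange of limit with $\partial_t$, and the candid flag that ``$\neq 0$'' really requires a nondegeneracy hypothesis (the first-$(D{-}1)$-coordinate marginal must not be stationary in $t$) which the paper leaves implicit — are genuine refinements of the same argument rather than a different approach.
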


\begin{proof}
    If $j_t(x) = -\partial_t \at(x)$ with $a^\theta_t(x)$ given by \cref{eq:autoregressive_at}, the $D$-th coordinate of the flux is 
\begin{equation*}
\begin{aligned}
    \textstyle
    [j_t]_{D}(x) 
    = & - \partial_t F^\theta_t (x_{D} | x_{1:D-1}) \prod_{i=1}^{D-1} f_t^\theta(x_{i} | x_{1:i-1}) \\ &  - F^\theta_t (x_{D} | x_{1:D-1}) \partial_t \left( \prod_{i=1}^{D-1} f_t^\theta(x_{i} | x_{1:i-1}) \right).
\end{aligned}
\end{equation*}
Since  $\lim_{x_{D} \to \infty} F_t (D) =1$ and therefore $\lim_{x_{D} \to \infty}\partial_t F_t (D) =0$, we deduce
\begin{equation}
\label{eq:spurious_flux_autoregressive}
    \lim_{x_{D} \rightarrow \infty} [j_t]_{D}(x) =  - \partial_t \left( \prod_{i=1}^{D-1} f_t^\theta(x_{i}| x_{1:i-1}) \right) \not=0
\end{equation}
and the claim of the lemma follows.
\end{proof} 
Next we show that we can zero the spurious flux by using the extra degree of freedom offered by the divergence-free field $\bt$ in the construction of \cref{eq:flux_parameterization}, as discussed next.

\subsection{Designing $\bt$ to combat the spurious flux phenomenon}\label{sec:bt_cancellation}

In order to remove the spurious flux in \cref{eq:spurious_flux_autoregressive}, we must cancel it out with a term that has the same limiting behavior. To this end, we propose adding the quantity $\sigmoid(x_D) \partial_t \left( \prod_{i=1}^{D-1} f^\theta_t(x_{i}| x_{1:i-1}) \right)$ to the $D$-th coordinate of the flux, where $\sigmoid$ is the sigmoid function, or more generally, a function/neural network approaches $1$ as its input goes to infinity. By adding this \textit{cancellation} term to the spurious flux from \cref{eq:spurious_flux_autoregressive}, we exactly remove the limiting behavior:
\begin{equation}
\begin{aligned}
     \lim_{x_{D} \rightarrow \infty} \textstyle & - \partial_t \left(F^\theta_t (x_{D}| x_{1:D-1}) \prod_{i=1}^{D-1} f_t^\theta(x_{i}| x_{1:i-1}) \right) \\ & + \sigmoid(x_{D}) \partial_t \left( \prod_{i=1}^{D-1} f^\theta_t(x_{i}| x_{1:i-1}) \right) = 0.
\end{aligned}
\end{equation}
However, we must construct $\bt$ to be divergence-free in order to leave $\rho_t$ invariant. Notice that this cancellation term has the form
\begin{equation}\textstyle
\label{eq:spurious_flux_design}
\begin{aligned}
    [\bt]_{D}(x) & = \sigmoid(x_{D}) \partial_t \left( \prod_{i=1}^{D-1} f^\theta_t(x_{i} | x_{1:i-1}) \right) \\ & = 
\frac{\partial}{\partial{x_{D-1}}} \Big[  \sigmoid(x_{D}) \partial_t \Big( F^\theta_t(x_{D-1} | x_{1:D-2})  \\ & \qquad \quad \times \prod_{i=1}^{D-2} f^\theta_t(x_{i} | x_{1:i-1}) \Big) \Big].
\end{aligned}
\end{equation}
To ensure $\bt$ is divergence-free, we add a \textit{compensating} term to the ${D-1}$-th coordinate,
\begin{equation}\textstyle
\label{eq:spurious_flux_from_divfree}
\begin{aligned}
    [\bt]_{{D-1}}(x) & = - \frac{\partial}{\partial x_{{D}}} \Big[  \sigmoid(x_{D}) \partial_t \Big( F^\theta_t(x_{D-1} | x_{1:D-2}) \\ & \qquad\quad \times\prod_{i=1}^{D-2} f^\theta_t(x_{i} | x_{1:i-1}) \Big) \Big].
\end{aligned}
\end{equation}
This results in a $\bt$ that is divergence-free since
\begin{equation}
	\nabla \cdot \bt = \frac{\partial}{\partial x_{{D}}}  [\bt]_{{D}} + \frac{\partial}{\partial x_{{D-1}}} [\bt]_{{D-1}} = 0.
\end{equation}
However, the $\bt$ in \cref{eq:spurious_flux_from_divfree} introduces a new spurious flux in the ${D-1}$ coordinate since $[\bt]_{{D-1}} \neq 0$ as $x_{{D-1}} \rightarrow \infty$. To completely remove spurious flux while keeping $\bt$ divergence-free, we must recursively add \textit{cancellation} and \textit{compensating} terms to each coordinate, until every coordinate has their spurious flux removed. This results in the following vector field for the general case:
\vspace{-0.9em}
\begin{center}			
\vspace{-0.5em}
\colorbox{mygray} {		
\begin{minipage}{0.987\linewidth} 	
\centering
\vspace{-0.8em}   
\begin{equation}\label{eq:autoregressive_bt}
[\bt]_{i}(x) =
\begin{cases}
    & \sigma(x_{i})\partial_t \prod_{j=1}^{D-1} f^\theta_t(x_{j}| x_{1:j-1}), \;\;\text{if } i = D\\
    & - \left( \prod_{j=2}^D \sigma'(x_{j}) \right) \partial_t  F^\theta_t(x_{1}),  \quad \text{if } i = 1\\
    & \left( \prod_{j=i+1}^D \sigma'(x_{{j}}) \right) \left(\sigma(x_{i}) -  F^\theta_t(x_{i} | x_{1:i-1})\right)  \\ & \times \partial_t \left( \prod_{j=1}^{i-1} f^\theta_t(x_{j} | x_{1:j-1})\right) \\ & -\left( \prod_{j=i+1}^D\sigma'(x_{{j}}) \right)  \partial_t  F^\theta_t(x_{i} | x_{1:i-1}) \\ &  \times\left( \prod_{j=1}^{i-1} f^\theta_t(x_{j} | x_{1:j-1} )\right), \quad  \text{otherwise}
\end{cases}
\end{equation}
\end{minipage}}			
\end{center}
\vspace{-0.5em}

The following results show that the $\bt$ in \cref{eq:autoregressive_bt} is divergence-free and that it completely removes the spurious flux problem.
\begin{lemma}
\label{lemma:div_free}
    The vector field $\bt$ in \cref{eq:autoregressive_bt} is divergence-free, \ie $\nabla \cdot \bt = 0$. We provide proof in \Cref{app:proof_1}. 
\end{lemma}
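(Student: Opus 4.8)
The plan is to verify $\nabla \cdot \bt = \sum_{i=1}^D \partial_{x_i} [\bt]_i = 0$ directly by exploiting the telescoping structure built into the recursive cancellation/compensating construction. The key observation is that each coordinate $[\bt]_i$ consists of a \emph{compensating} piece (carrying a factor $\sigma'(x_i)$ and designed to cancel the $x_i$-derivative of the layer above) and a \emph{cancellation} piece (carrying a factor $\sigma(x_i)$, designed to kill the spurious flux in coordinate $i$ but which itself produces a new spurious flux to be handled by coordinate $i-1$). The first step would be to introduce compact notation, e.g. write $P_{<i}(x) \coloneqq \prod_{j=1}^{i-1} f^\theta_t(x_j \mid x_{1:j-1})$ and $S_{>i}(x) \coloneqq \prod_{j=i+1}^{D} \sigma'(x_j)$, together with the elementary identities $\partial_{x_i} F^\theta_t(x_i \mid x_{1:i-1}) = f^\theta_t(x_i \mid x_{1:i-1})$ and $\partial_{x_i} P_{<i+1} = P_{<i}\,\partial_{x_i} F^\theta_t(x_i\mid x_{1:i-1}) \cdot(\text{correction})$—more precisely $P_{<i+1} = F$-independent; the cleaner statement is $P_{\le i} = P_{<i}\, f^\theta_t(x_i\mid x_{1:i-1})$, and I'd want to be careful that $f^\theta_t(x_j\mid x_{1:j-1})$ for $j>i$ does not depend on $x_i$ in the factorized case but \emph{does} in the autoregressive case, so the autoregressive conditioning structure must be tracked.

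The main computation is then to differentiate each $[\bt]_i$ with respect to $x_i$ and show the resulting terms telescope. Concretely, $\partial_{x_D}[\bt]_D$ produces a term $\sigma'(x_D)\,\partial_t P_{<D}$; this should be cancelled by the $\sigma'(x_D)$-carrying compensating term that appears (via $S_{>D-1}=\sigma'(x_D)$) inside $\partial_{x_{D-1}}[\bt]_{D-1}$. Proceeding inductively down the coordinates, differentiating the generic $[\bt]_i$ (the "otherwise" case) in $x_i$ hits three kinds of terms: (a) $\sigma'(x_i)$ times the cancellation-layer content, which matches the compensating term coming down from coordinate $i+1$; (b) $-f^\theta_t(x_i\mid\cdot)$ times $S_{>i}\,P_{<i}\,\partial_t(\cdots)$ arising from $\partial_{x_i}F^\theta_t(x_i\mid x_{1:i-1})$ in the second line, which should recombine with (c) the $\partial_{x_i}$ of the first line's $(\sigma(x_i)-F^\theta_t(x_i\mid x_{1:i-1}))$ factor — noting $\partial_{x_i}\sigma(x_i)=\sigma'(x_i)$ feeds the next layer and $-\partial_{x_i}F^\theta_t = -f^\theta_t$ combines with the factor $\partial_t P_{<i}$ to rebuild $\partial_t P_{<i+1}$ up to the conditioning subtlety. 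I would organize this as: define the partial sums $\Sigma_k \coloneqq \sum_{i=k}^D \partial_{x_i}[\bt]_i$ and prove by downward induction on $k$ that $\Sigma_k$ equals a single explicit "residual" term (a compensating term of level $k-1$ with its $\sigma'$ stripped), so that $\Sigma_1$'s residual is killed by the $i=1$ boundary case, which is exactly the $-\left(\prod_{j=2}^D\sigma'(x_j)\right)\partial_t F^\theta_t(x_1)$ term whose $x_1$-derivative is $-\left(\prod_{j=2}^D\sigma'(x_j)\right)\partial_t f^\theta_t(x_1)$.

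The hard part will be bookkeeping the autoregressive conditioning: when I differentiate a product $\prod_{j=1}^{i-1} f^\theta_t(x_j\mid x_{1:j-1})$ or a CDF $F^\theta_t(x_i\mid x_{1:i-1})$ appearing inside $[\bt]_m$ for $m>i$, in the genuinely autoregressive model those factors depend on $x_i$ through the conditioning, so $\partial_{x_i}$ is not as simple as in the factorized illustration used to motivate \cref{eq:spurious_flux_design}. I expect either that the construction in \cref{eq:autoregressive_bt} is implicitly written for the factorized case (each $[\bt]_i$ depends only on $x_1,\dots,x_i$ through genuine arguments, with conditioning variables treated as parameters), in which case the telescoping is clean, or that one must additionally invoke that the extra conditioning-derivative terms themselves cancel in pairs across adjacent coordinates. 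I would first carry out the $D=2$ and $D=3$ cases explicitly to confirm the telescoping pattern and pin down exactly how the conditioning is meant to be handled, then promote that to the general downward induction described above; the routine chain-rule expansions I would relegate to \Cref{app:proof_1} as the excerpt already signals.
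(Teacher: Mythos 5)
Your telescoping plan is exactly the paper's argument: the paper computes each $\partial_{x_i}[\bt]_i$, observes that for $2 \le i \le D-1$ the result can be written as $m_{i-1} - m_i$ where
\[
m_i = \Bigl(\prod_{j=i+1}^D\sigma'(x_{{j}}) \Bigr)\,\partial_t \Bigl( \prod_{j=1}^{i} f^\theta_t(x_{j} \mid x_{1:j-1})\Bigr),
\]
and that the boundary coordinates give $\partial_{x_1}[\bt]_1 = -m_1$ and $\partial_{x_D}[\bt]_D = m_{D-1}$, so the sum telescopes to zero. Your proposed downward induction $\Sigma_k := \sum_{i=k}^D \partial_{x_i}[\bt]_i = m_{k-1}$ is just a bookkeeping reorganization of the same computation.

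The one place where you hesitate --- the autoregressive conditioning --- is actually a non-issue, and worth seeing clearly before you spend time on $D=2,3$ special cases. The divergence $\nabla\cdot\bt = \sum_i \partial_{x_i}[\bt]_i$ only requires the \emph{diagonal} derivatives: you never differentiate $[\bt]_m$ by $x_i$ for $m \ne i$. And within $[\bt]_i$ itself, every factor $f^\theta_t(x_j \mid x_{1:j-1})$ appearing in the product $\prod_{j=1}^{i-1}$ has $j \le i-1$, so its arguments (both the value $x_j$ and the conditioning block $x_{1:j-1}$) involve only indices strictly below $i$; the prefactor $\prod_{j=i+1}^D\sigma'(x_j)$ involves only indices strictly above $i$; and $F^\theta_t(x_i \mid x_{1:i-1})$ depends on $x_i$ only as its value argument, with $\partial_{x_i} F^\theta_t(x_i \mid x_{1:i-1}) = f^\theta_t(x_i \mid x_{1:i-1})$. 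Hence the only $x_i$-dependence in $[\bt]_i$ is through $\sigma(x_i)$ and $F^\theta_t(x_i\mid x_{1:i-1})$ (and the $\partial_t F^\theta_t$ term), precisely as in the factorized case, and the telescoping goes through verbatim. There are no ``extra conditioning-derivative terms'' to chase.
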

\begin{theorem}
\label{theorem:no_flux}
Let $\rho_t$ and $j_t$ be given by \cref{eq:rho_parameterization} and \cref{eq:flux_parameterization}, respectively, with $a^t_\theta$ given by~\cref{eq:autoregressive_at} and $b_t^\theta$ by \cref{eq:autoregressive_bt}. Then the continuity \cref{eq:continuity_eq} holds, the density satisfies $\rho_t>0$ and $\int_{\R^D}\rho_t(x) dx=1$, and in addition there are no spurious flux, i.e. $j_t \rightarrow 0$ as $x \rightarrow \infty$. We provide proof in \Cref{app:proof_1}. 
\end{theorem}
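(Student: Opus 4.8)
The plan is to prove the three assertions of \Cref{theorem:no_flux} in sequence, with the bulk of the work being the spurious-flux claim and the verification that the recursive construction in \cref{eq:autoregressive_bt} telescopes correctly. First I would dispose of the density constraints: by construction $\rho_t = \nabla\cdot\at = \prod_{i=1}^D f^\theta_t(x_i\mid x_{1:i-1})$, which is a product of one-dimensional conditional densities (each a mixture of logistics, hence strictly positive and integrating to one), so $\rho_t > 0$ and $\int_{\R^D}\rho_t\,dx = 1$ follow immediately by Fubini. This part does not involve $\bt$ at all. Next, the continuity equation: by \Cref{lemma:continuity} it suffices that $\nabla\cdot\bt = 0$, and this is exactly \Cref{lemma:div_free}, whose proof is deferred to the appendix; so here I would simply invoke that lemma. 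Hence $\rho_t$, $j_t = -\partial_t\at + \bt$ satisfy \cref{eq:continuity_eq}.

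The substantive part is showing $j_t \to 0$ as $x\to\infty$. Since $j_t = -\partial_t\at + \bt$ and $\at$ has only its $D$-th coordinate nonzero, the $i$-th coordinate of $j_t$ for $i<D$ equals $[\bt]_i$, while the $D$-th coordinate is $-\partial_t[\at]_D + [\bt]_D$. The strategy is to show each coordinate vanishes in the limit where \emph{all} $x_j\to\infty$ (or, more carefully, along any sequence with $\min_j x_j\to\infty$, which is what the statement "$x\to\infty$" should be taken to mean here). I would handle the $D$-th coordinate exactly as in the Lemma preceding \Cref{sec:bt_cancellation}: $-\partial_t[\at]_D = -\partial_t\bigl(F^\theta_t(x_D\mid x_{1:D-1})\prod_{i<D}f^\theta_t\bigr)$, and as $x_D\to\infty$ we have $F^\theta_t\to 1$, $\partial_t F^\theta_t\to 0$, and $\partial_{x_D}F^\theta_t = f^\theta_t(x_D\mid\cdot)\to 0$; meanwhile $[\bt]_D = \sigma(x_D)\,\partial_t\prod_{i<D}f^\theta_t$ and $\sigma(x_D)\to 1$, so the two surviving terms cancel. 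For a general coordinate $1<i<D$, I would substitute the explicit formula from \cref{eq:autoregressive_bt}: the factor $\prod_{j>i}\sigma'(x_j)\to 0$ because $\sigma'(x_j)\to 0$ as $x_j\to\infty$, provided the remaining factors stay bounded; and one checks that $\sigma(x_i) - F^\theta_t(x_i\mid x_{1:i-1})$, $\partial_t F^\theta_t$, and $\partial_t\prod_{j<i}f^\theta_t$ are all bounded (the densities and CDFs of a logistic mixture with smooth bounded-in-$t$ parameters are uniformly bounded, as are their $t$-derivatives). For $i=1$ the same argument applies since $\prod_{j\ge 2}\sigma'(x_j)\to 0$ kills the bounded factor $\partial_t F^\theta_t(x_1)$. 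Thus every coordinate of $j_t$ tends to zero.

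The main obstacle I anticipate is \emph{formalizing the limit correctly and controlling the boundedness of the mixture-of-logistics derivatives}. Two subtleties require care. First, "$x\to\infty$" must be interpreted so that the claim is actually true: if only $x_D\to\infty$ while $x_1$ stays fixed, then $[\bt]_D$ still behaves well, but the cancellation terms in lower coordinates rely on their own $\sigma'(x_j)$ factors, so one really needs all coordinates (or at least the relevant ones) to diverge; I would state the limit as $\min_j x_j \to +\infty$ and note this matches the intended "outside the support of $\rho_t$" reading. Second, pulling the $t$-derivative inside the finite products is routine (product rule), but each resulting term must be shown bounded uniformly: this uses that $\mu^\theta_l, s^\theta_l, \alpha^\theta_l$ are continuous (hence locally bounded) in $t$ on $[0,1]$ with bounded $t$-derivatives, $s^\theta_l$ bounded below away from $0$, and the elementary bounds $0\le\sigma\le1$, $0\le\sigma'\le\tfrac14$, $0\le F^\theta_t\le 1$, $0\le f^\theta_t \le \sum_l\alpha^\theta_l s^\theta_l/4$. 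Given these, the $\sigma'(x_j)\to 0$ factors force each lower coordinate to zero and the $D$-th coordinate cancellation is exact, completing the proof. I would relegate the bookkeeping of which bounded factor multiplies which vanishing factor in \cref{eq:autoregressive_bt} to the appendix alongside the proof of \Cref{lemma:div_free}, since it is mechanical once the boundedness facts are in hand.
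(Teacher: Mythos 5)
Your proposal follows essentially the same route as the paper's own proof: normalization by Fubini on the conditional-density product, continuity by invoking \Cref{lemma:div_free} together with \Cref{lemma:continuity}, and then a coordinate-by-coordinate limit argument in which the $D$-th coordinate of $j_t$ vanishes by the exact cancellation of $-\partial_t[\at]_D$ against $[\bt]_D$ as $\sigma(x_D)\to 1$ and $\partial_t F_t^\theta\to 0$, while coordinates $i<D$ vanish because the $\prod_{j>i}\sigma'(x_j)$ prefactor tends to zero and the remaining factors are bounded. The paper's appendix proof is slightly more terse (it passes silently over boundedness of the $t$-derivatives of the mixture parameters and takes iterated limits $x_D\to\infty$ then $x_{1:D-1}\to\infty$ without formalizing what $x\to\infty$ means), so your explicit handling of these two points is a welcome refinement but not a different argument.
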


Note that in our implementation, we compute all quantities in \cref{eq:autoregressive_bt} in parallel across all coordinates using autoregressive architectures, and in logarithm space for numerical stability. The derivatives $\partial_t$ are computed using memory-efficient forward-mode automatic differentiation, so the total cost of computing \cref{eq:autoregressive_bt} has the same asymptotic compute cost of a single evaluation of the autoregressive model.

\subsection{The factorized case: simplifications and generalizations}\label{sec:factorized_case}

The vector field in \cref{eq:autoregressive_bt} can be drastically simplified for the factorized case by setting $\sigma(x_i) = F^\theta_t (x_i)$, which gives 
\begin{equation}\label{eq:factorized_bt:prob}
[\bt]_{i}(x) =
\begin{cases}
    & F_t^\theta(x_{D}) \partial_t \left( \prod_{j=1}^{D-1} f^\theta_t(x_{j}) \right), \quad  \text{if } i = D\\
    & -\left( \prod_{j=2}^D f^\theta_t(x_{j}) \right) \partial_t  F^\theta_t(x_{1}) , \quad  \text{if } i = 1\\
    & -\left( \prod_{j=i+1}^Df_t^\theta(x_{{j}}) \right) \partial_t  F^\theta_t(x_{i}) \\ & \times\left(\prod_{j=1}^{i-1} f^\theta_t(x_{j} )\right), \quad \text{otherwise}.
\end{cases}
\end{equation}
Substituting this back into \cref{eq:drift_from_flux} results in the simplified velocity field (for $g_t =0$):
\vspace{-0.9em}
\begin{center}			
\vspace{-0.5em}
\colorbox{mygray} {		
\begin{minipage}{0.987\linewidth} 	
\centering
\vspace{-0.8em}  
\begin{equation}\label{eq:factorized_ut}
\begin{aligned}
    [u^\theta_t]_{i}(x) & = j_t^\theta (x)/\rho_t^\theta (x) \\ &  = (- \partial_t \at (x) + \bt(x)) / \rho_t^\theta (x)\\
    & = -\frac{\partial_t F_t^\theta (x_i) }{f_t^\theta (x_i)}
\end{aligned}
\end{equation}
\end{minipage}}			
\end{center}
\vspace{-0.5em}
for all $i \in \{1,\ldots,D\}$, which is easy to implement and compute in practice. Furthermore, we note that for the factorized model, the velocity is always {\it kinetic optimal} as $u_t(x)$ in \cref{eq:factorized_ut} is a gradient field. In particular,  it means that it is the velocity that results in the shortest paths out of all velocities that generate this $\rho_t$.

To increase the flexibility of the factorized model, note that we can combine multiple pairs of $(\rho_t^k, u_t^k)$ into a mixture model with coefficients $\gamma^k$:
\begin{equation}\label{eq:mixture_rho}
    \rho_t(x) = \sum_{k=1}^K \gamma^k \rho_t^k(x), \quad u_t(x) = \sum_{k=1}^K \frac{\gamma^k \rho_t^k(x)}{\rho_t(x)} u_t^k(x).
\end{equation}
\begin{proposition}
\label{prop:mixture}
    If each pair of $\rho_t^k$ and $u_t^k$ satisfy the Fokker-Planck equation as in \cref{eq:fp_eq}, then the $\rho_t$ and $u_t$ as defined in \cref{eq:mixture_rho} also satisfy the Fokker-Planck equation. Proof is provided in \Cref{app:prop1_proof}. 
\end{proposition}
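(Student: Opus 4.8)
The plan is to exploit the fact that the Fokker--Planck operator is linear in the pair $(\rho_t,\rho_t u_t)$, i.e.\ in the density and the \emph{momentum} field $\rho_t u_t$. First I would write \cref{eq:fp_eq} for each mixture component,
\begin{equation*}
    \partial_t \rho_t^k = -\nabla\cdot(\rho_t^k u_t^k) + \tfrac12 \diff^2 \Delta \rho_t^k, \qquad k = 1,\dots,K,
\end{equation*}
which holds by hypothesis. Multiplying the $k$-th equation by the constant $\gamma^k$, summing over $k$, and using linearity of $\partial_t$, $\nabla\cdot$, and $\Delta$, gives
\begin{equation*}
    \partial_t \Big(\sum\nolimits_k \gamma^k \rho_t^k\Big) = -\nabla\cdot\Big(\sum\nolimits_k \gamma^k \rho_t^k u_t^k\Big) + \tfrac12\diff^2 \Delta\Big(\sum\nolimits_k \gamma^k \rho_t^k\Big).
\end{equation*}

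The second step is to identify the three aggregated quantities. By the definition of $\rho_t$ in \cref{eq:mixture_rho}, the first and last terms are exactly $\partial_t\rho_t$ and $\tfrac12\diff^2\Delta\rho_t$. For the divergence term, I would observe that the mixture velocity in \cref{eq:mixture_rho} is engineered precisely so that the momentum field of the mixture equals the mixture of momentum fields:
\begin{equation*}
    \rho_t u_t = \rho_t \sum\nolimits_k \frac{\gamma^k \rho_t^k}{\rho_t}\, u_t^k = \sum\nolimits_k \gamma^k \rho_t^k u_t^k .
\end{equation*}
Substituting this identity into the aggregated equation yields $\partial_t\rho_t = -\nabla\cdot(\rho_t u_t) + \tfrac12\diff^2\Delta\rho_t$, which is \cref{eq:fp_eq}, completing the argument.

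Two minor points I would make explicit for rigor. First, $u_t$ in \cref{eq:mixture_rho} involves division by $\rho_t$, so it is defined wherever $\rho_t>0$; on the set where $\rho_t$ vanishes every $\rho_t^k$ vanishes as well, so with the convention $0\cdot u = 0$ the identity $\rho_t u_t = \sum_k \gamma^k \rho_t^k u_t^k$ extends to all of $\R^D$ and both sides of the Fokker--Planck equation are zero there. Second, under the natural assumption $\gamma^k\ge 0$ and $\sum_k \gamma^k = 1$, positivity and normalization of $\rho_t$ follow immediately from the corresponding properties of each $\rho_t^k$, so the mixture also satisfies \cref{eq:density_constraints} (the Fokker--Planck identity itself, however, holds for arbitrary constant coefficients $\gamma^k$, since it only uses linearity).

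I do not anticipate a genuine obstacle: the entire content is the linearity of the Fokker--Planck equation in $(\rho,\rho u)$ together with the deliberate choice of the mixture velocity that makes $\rho_t u_t$ the convex combination of the component momenta. The only step requiring slight care is the well-definedness of $u_t$ on the zero set of $\rho_t$, which the convention above resolves.
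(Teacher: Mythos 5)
Your proof is correct and is essentially the same linearity argument the paper gives: you sum the component Fokker--Planck equations against the weights $\gamma^k$ and observe that $\rho_t u_t = \sum_k \gamma^k \rho_t^k u_t^k$, which is exactly the regrouping the paper performs when expanding $\partial_t \rho_t$. Your remarks on well-definedness of $u_t$ where $\rho_t = 0$ and on normalization under $\gamma^k \ge 0$, $\sum_k \gamma^k = 1$ are reasonable additions that the paper leaves implicit.
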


\subsection{Learning an independent divergence-free component}\label{sec:divfree}

While the choice of $\bt$ in \cref{eq:autoregressive_bt} removes the spurious nonzero flux values at infinity, this parameterization lacks the flexibility in optimizing $j_t$, \eg it does not necessarily correspond to kinetic optimal velocity fields $u_t$, unless we use the factorized model discussed in \Cref{sec:factorized_case}.
In order to handle a wider range of applications where we do optimize over $u_t$, we can include a flexible learnable component into $j_t$ that leaves the continuity equation invariant. 

Let the new flux field be parameterized as 
\begin{equation}
   j_t= -\partial_t \at + \bt + v_t^\theta, \qquad \text{ where } \nabla \cdot v_t^\theta = 0,
\end{equation}
so $\ft: \R^{D+1} \rightarrow \R^D$ is a divergence-free vector field. This construction still satisfies \cref{eq:continuity_eq} because 
\begin{equation}
    \partial_t \rho_t + \nabla \cdot j_t = \partial_t (\nabla \cdot \at) - \nabla \cdot (\partial_t \at + \bt + v_t^\theta) = 0 
\end{equation}

To satisfy the divergence-free constraint, we adopt the construction in \cite{richter2022neural} and parameterize an matrix-valued function $A_t^\theta: \R^{D+1} \rightarrow \R^{D\times D}$ with neural networks and we let 
\begin{equation}
    v_t^\theta = \nabla \cdot \left(A_t^\theta - (A_t^\theta)^T \right)
\end{equation}
where the divergence is taken over the rows of the anti-symmetric matrix $A_t^\theta - (A_t^\theta)^T$. Let $A_{t;i,j}^\theta$ denote the $(i,j)$ entry of $A_t^\theta$. We can easily verify that $v_t^\theta$ is divergence-free with the following:
\begin{equation}
\begin{aligned}
    \nabla \cdot v_t^\theta = & \sum_{i = 1}^D \sum_{j = 1}^D \partial_{x_i} \partial_{x_j} \left(A_{t;i,j}^\theta - (A_{t;i,j}^\theta)^T \right) \\ & -  \partial_{x_j} \partial_{x_i} \left(A_{t;i,j}^\theta - (A_{t;i,j}^\theta)^T \right)  = 0
\end{aligned}
\end{equation}

\section{Experiments}\label{sec:experiments}

In each of the following sections, we consider broader and broader problem statements, where each successive problem setting roughly builds on top of the previous ones.
Throughout, we parameterize the density $\rho_t$ and a flux $j_t$ following \Cref{sec:ncl} in order to satisfy the continuity equation, and compute the velocity field $u_t$ using \cref{eq:drift_from_flux}. 
All models are trained without simulating the differential equation in \cref{eq:sde}. 
While there exist simulation-free baselines for the first few settings (Sections \ref{sec:stpp} \& \ref{sec:ot}), to the best of our knowledge, we are the first truly simulation-free approach for the more complex problem setting involving mean-field optimal control (\Cref{sec:soc}). Experimental details are provided in \Cref{app:experimental_setup}.

\begin{table}
\centering
\setlength{\tabcolsep}{2.5pt}
\resizebox{\linewidth}{!}{%
\begin{tabular}{@{} l r r r r r@{}}\toprule
Model & \tcen{Pinwheel} & \tcen{Earthquakes JP}
& \tcen{COVID-19 NJ} & \tcen{CitiBike}\\
\cmidrule(r){1-1}\cmidrule(lr){2-2} \cmidrule(lr){3-3} 
\cmidrule(lr){4-4} \cmidrule(lr){5-5}
Conditional KDE (\cite{chen2020neural})& 
$2.958$ {\tiny $\pm 0.000$}& 
$2.259$ {\tiny $\pm 0.001$} & 
$2.583$ {\tiny $\pm 0.000$}& 
$2.856$ {\tiny $\pm 0.000$}\\
Neural Flow (\cite{bilovs2021neural})& 
\small \color{gray} N/A & 
$1.633 $& 
$1.916 $& 
$2.280 $\\
CNF (\cite{chen2020neural})& 
$2.185$ {\tiny $\pm 0.003$}& 
$1.459$ {\tiny $\pm 0.016$}& 
$2.002$ {\tiny $\pm 0.002$}& 
$2.132$ {\tiny $\pm 0.012$}\\

\cmidrule(r){1-1}\cmidrule(lr){2-2} \cmidrule(lr){3-3} 
\cmidrule(lr){4-4} \cmidrule(lr){5-5}
\methodname (Factorized) & 
$2.028$ {\tiny $\pm 0.062$}& 
$1.217$ {\tiny $\pm 0.024$}& 
$1.846$ {\tiny $\pm 0.012$}& 
$1.462$ {\tiny $\pm 0.033$}\\
\methodname (Autoregressive) & 
$\mathbf{1.936}$ {\tiny $\pm 0.083$}&  
$\mathbf{1.184}$ {\tiny $\pm 0.031$}& 
$\mathbf{1.732}$ {\tiny $\pm 0.009$}& 
$\mathbf{1.239}$ {\tiny $\pm 0.024$}\\
\bottomrule
\end{tabular}
}
\caption{Negative log-likelihood per event on held-out test data (lower is better).}
\label{table:results_STPP}
\end{table}

\subsection{Spatio-temporal generative modeling}\label{sec:stpp}

Our goal is to fit the model to data observations from an unknown data distribution $q(t, x)$. 
We consider the unconditional case of generative modeling where samples are obtained from marginal distributions across time, while the individual trajectories are unavailable.
As a canonical choice, we use the cross entropy as the loss function for learning $\rho_t$.
\begin{equation}
    L_{\text{GM}} = \E_{t, x\sim q(t, x)} \left[ - \log \rho_t(x) \right]
\end{equation}

We consider datasets of spatial-temporal events preprocessed by \citet{chen2020neural} and these datasets are sampled randomly in continuous time. We take only the spatial component of these datasets, as this is our core contribution.
To evaluate the capability of our method on modeling spatial-temporal processes, we test our proposed method on these datasets and compare against state-of-the-art models on these datasets by \cite{chen2020neural} and \cite{bilovs2021neural}. 

We report the log-likelihoods per event on held-out test data of our method and baseline methods in Table~\ref{table:results_STPP}, highlighting that our method outperforms the baselines with substantially better spatial log-likelihoods across all datasets considered here. 

\subsection{Learning to transport with optimality conditions}\label{sec:ot}

\begin{table}
\centering
\setlength{\tabcolsep}{2.5pt}
\resizebox{\linewidth}{!}{%
\begin{tabular}{@{} l r r r r r@{}}\toprule
Model & \tcen{$W_2(q_{t_1}, \widehat{q}_{t_1})$} & \tcen{$W_2(q_{t_2}, \widehat{q}_{t_2})$}
& \tcen{$W_2(q_{t_3}, \widehat{q}_{t_3})$} & \tcen{$W_2(q_{t_4}, \widehat{q}_{t_4})$}\\
\cmidrule(r){1-1}\cmidrule(lr){2-2} \cmidrule(lr){3-3} 
\cmidrule(lr){4-4} \cmidrule(lr){5-5}
OT-flow & 
$0.75$ & 
$0.93$  & 
$0.93$ & 
$0.88$ \\
Entropic Action Matching & 
$0.58$ {\tiny $\pm 0.015$}& 
$0.77$ {\tiny $\pm 0.016$}& 
$\mathbf{0.72}$ {\tiny $\pm 0.007$}& 
$0.74 ${\tiny $\pm 0.017$}\\
Neural SDE & 
$0.62$ {\tiny $\pm 0.016$}& 
$0.78$ {\tiny $\pm 0.021$}& 
$0.77$ {\tiny $\pm 0.017$}& 
$0.75$ {\tiny $\pm 0.017$}\\

\cmidrule(r){1-1}\cmidrule(lr){2-2} \cmidrule(lr){3-3} 
\cmidrule(lr){4-4} \cmidrule(lr){5-5}
\methodname (Factorized, Directly Sampled) & 
$0.56$ {\tiny $\pm 0.009$}& 
$0.79$ {\tiny $\pm 0.012$}& 
$0.74$ {\tiny $\pm 0.010$}& 
$0.72$ {\tiny $\pm 0.006$}\\
\methodname (Autoregressive, Directly Sampled) & 
$\mathbf{0.52}$ {\tiny $\pm 0.004$}& 
$\mathbf{0.74}$ {\tiny $\pm 0.005$}& 
$\mathbf{0.72}$ {\tiny $\pm 0.003$}& 
$\mathbf{0.69}$ {\tiny $\pm 0.004$}\\
\methodname (Factorized, Transported) & 
$0.58$ {\tiny $\pm 0.015$}& 
$0.80 $ {\tiny $\pm 0.007$}& 
$0.76$ {\tiny $\pm 0.009$}& 
$0.75$ {\tiny $\pm 0.009$}\\
\methodname (Autoregressive, Transported) & 
$0.53$ {\tiny $\pm 0.013$}& 
$0.76$ {\tiny $\pm 0.008$}& 
$0.73$ {\tiny $\pm 0.005$}& 
$0.71$ {\tiny $\pm 0.008$}\\
\bottomrule
\end{tabular}
}
\caption{The Wasserstein-2 distance between the test marginals and marginal distributions from the model calculated by the test samples and the samples obtained from directly sampling from the model. For our own methods, we report standard deviation estimated across 20 runs.}
\label{table:results_OT}
\end{table}

\begin{table}
\centering
\setlength{\tabcolsep}{2.5pt}
\resizebox{\linewidth}{!}{%
\begin{tabular}{@{} l r r r r r@{}}\toprule
Model  & \tcen{$W_2(q_{t_1}, \widehat{q}_{t_2})$}
& \tcen{$W_2(q_{t_2}, \widehat{q}_{t_3})$} & \tcen{$W_2(q_{t_3}, \widehat{q}_{t_4})$}\\
\cmidrule(r){1-1}\cmidrule(lr){2-2} \cmidrule(lr){3-3} 
\cmidrule(lr){4-4}

\methodname (Factorized) & 
$3.45$ {\tiny $\pm 0.125$}& 
$3.67$ {\tiny $\pm 0.103$}& 
$4.09$ {\tiny $\pm 0.147$}\\
\methodname (Autoregressive) & 
$\mathbf{2.85}$ {\tiny $\pm 0.075$}& 
$\mathbf{3.14}$ {\tiny $\pm 0.082$}& 
$\mathbf{3.62}$ {\tiny $\pm 0.097$}\\
\bottomrule
\end{tabular}
}
\caption{The Wasserstein-2 distance between the distributions transported from the test marginals at $t_i$ and the test marginals at $t_{i+1}$. We use this Wasserstein-2 distance to measure how kinetically optimal our trained maps are. We report the mean and standard deviation estimated across 20 runs. }
\label{table:results_kinetic_OT}

\end{table}

We next consider settings where the data are only sparse observed at select time values, and the goal is to learn a transport between each consecutive observed time values, subject to some optimality conditions. The simplest case is dynamic optimal transport \citet{villani2021topics}, where we introduce a kinetic energy to the loss function in order to recover short trajectories between consecutive time values.
\begin{equation}
\begin{aligned}
    L_{\text{OT}} = & \sum_{t \in \{t_i\}_{i=1}^n} \E_{x\sim q_{t_i}(x)} \left[ - \log \rho_t(x) \right] \\ & + \int_{t_0}^{t_n} \E_{x \sim \rho_t(x)} \left[ \norm{u_t(x)}^2 \right] dt
\end{aligned}
\end{equation}
As our benchmark problem, we investigate the dynamics of cells based on limited observations, focusing on the single-cell RNA sequencing data of embryoid bodies as analyzed by \citet{neklyudov2023action}. This dataset offers sparse observations in a 5-dimensional PCA decomposition of the original cell data introduced by \cite{moon2019visualizing} at discrete time points $t_0 = 0, t_1 = 1, t_2 = 2, t_3 = 3, t_4 =  4$. Our objective is twofold: first, to fit the time-continuous distribution of the dataset with given sparse observations, and second, to obtain optimal transport (OT) paths between these marginal distributions.

Numerous methods exist for learning continuous system dynamics from snapshots of temporal marginals. For example, the Neural SDE framework\citet{li2020scalable} —an extension of Neural ODEs\citet{chen2019neuralordinarydifferentialequations}—offers a robust approach to learning stochastic dynamics by employing scalable gradients computed via the adjoint sensitivity method. Similarly, OT-Flows\citet{onken2021ot} build upon Neural ODEs by integrating regularizations derived from optimal transport theory. In contrast, action matching\citet{neklyudov2023action} avoids back-propagation through stochastic or deterministic differential equations, thereby achieving significantly faster training. We trust this expanded discussion clarifies the rationale behind our comparisons and highlights the strengths of each approach.

To evaluate the performance of our model on this problem setup as compared to the existing methods, we compute the Wasserstein-2 distance between our fitted model $\rho_t$ and the data distribution $q_t$ at $t = 0,1,2,3,4$. The Wasserstein-2 distance is computed with the samples we directly sample from the model marginal distribution $\rho_t$ and the held-out test data from the dataset. Additionally, we compute the Wasserstein-2 distance between test marginals and model marginals by transporting samples from the data marginal $q_{t_i}$ to our estimated marginals at the next time value $\widehat{q}_{t_{i+1}}$ using the trained velocity field. We report the results in Table~\ref{table:results_OT} where our method surpass the existing methods in terms of fitting the time-continuous distribution of the dataset with given sparse observations. 

Compared to prior works, we not only learn the transport map and the marginal densities of the dataset, but also optimize the model for the kinetically optimal transport map. 
Our model has the flexibility in terms of training for the kinetic optimal transport map because of the additional learnable component $v_t^\theta$ that can be incorporated into the flux term (\Cref{sec:divfree}), all the while capable to be learned without sequential simulation of the underlying dynamical system. 
Results of optimizing for kinetically optimal transport map are reported in Table~\ref{table:results_kinetic_OT}. As compared to the factorized model, which is simpler and easier to train, the autoregressive model achieves better performance in both density fitting and optimizing for the optimal transport. 

\subsection{Mean-field stochastic optimal control}\label{sec:soc}

Stochastic optimal control (SOC; \citealt{mortensen1989stochastic, fleming2012deterministic, kappen2005path}) aims at finding the optimal dynamics model given an objective function, instead of data observations. SOC problems arise in wide variety of applications in sciences and engineering \citep{pham2009continuous,FLEMING2004979,zhang2022pathintegralsamplerstochastic,holdijk2023stochasticoptimalcontrolcollective,hartmann2013characterization,HartmannCarsten} and we provide numerical evidence to illustrate that our framework can be extended to solving SOC problems, including mean-field type of SOC problems~\citep{bensoussan2013mean}, which have wide applications in finance~\citep{FLEMING2004979,pham2009continuous} and robotics~\citep{theodorou2011iterative, pavlov2018narrow}. 
Reducing the SOC problem into our setting in \cref{eq:problem_formulation}, we have the following objective function:
\begin{equation}
\begin{aligned}
    L_\text{SOC} = & \int_0^1 \phi_t(\rho_t) dt + \E_{x\sim \rho_t} \left[ \frac{1}{2\sigma_t^2}\norm{u_t(x) - v_t(x)}^2 \right] dt \\ & + \Phi(\rho_1) + \E_{x_0 \sim q_0} \left[ -\log\rho_0(x_0) \right]
\end{aligned}
\end{equation}
where $q_0$ is a given initial distribution, $v_t$ is a given base drift function, and we use $\Phi(\rho_1) = \E_{x_1 \sim q_1} \left[ -\log\rho_1(x_1) \right]$ as the terminal cost so that the model can also be fitted to a given terminal distribution $q_1$. 

For our task, we formulate problems with circular obstacles that the model must navigate around. In particular, for circular obstacles with radius $R$ and center coordinate $c$, the running cost is defined as: 
\begin{equation}
\begin{aligned}
    \int_0^1 \phi_t(\rho_t)dt = & \E_{X_t \sim \rho_t} [\text{softplus} \left(R^2 - (X_t - c)^2\right)] \\ & + \eta \E_{X_t \sim \rho_t} [\log \rho_t (X_t) ] 
\end{aligned}
\end{equation}
where $\E_{X_t \sim \rho_t} [\log \rho_t (X_t) ]$ is the entropy of the model---\ie, a mean-field cost---used to encourage the model to find all the possible paths and $\eta$ is a weighting.

We test our method on the motion planning tasks introduced by \citet{le2023acceleratingmotionplanningoptimal}. The task is to navigate from the source to the target distribution while avoiding randomly initialized circular obstacles, where we use the entropy regularization to encourage finding multiple paths and to ensure we find robust solutions.
We visualize the trained model in Figure~\ref{fig:transport_mean_field}, where our framework trained with diffusion coefficient $g_t = 0$ can handle different environments and can also be used to produce reasonable samples when additional noise is present, \ie, $g_t > 0$. 

\begin{figure}[t]
    \centering
    \includegraphics[width=0.32\linewidth]{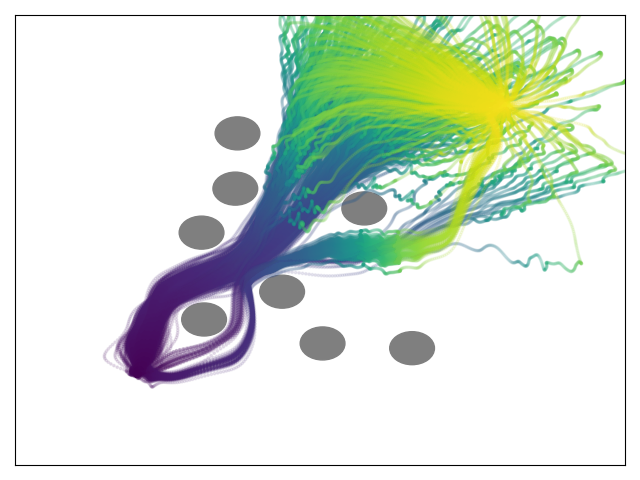}
    \includegraphics[width=0.32\linewidth]{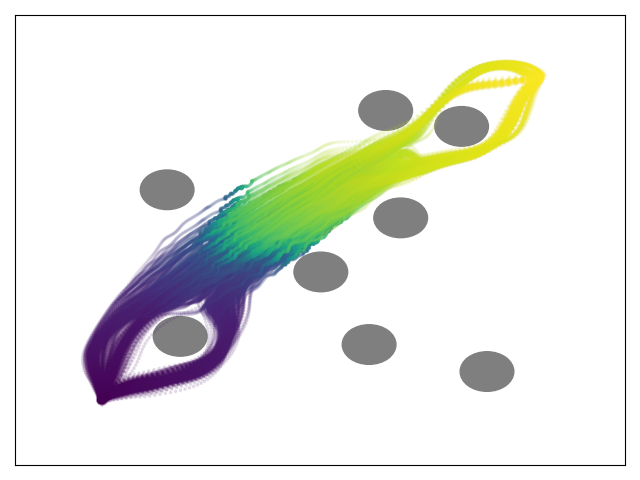}
    \includegraphics[width=0.32\linewidth]{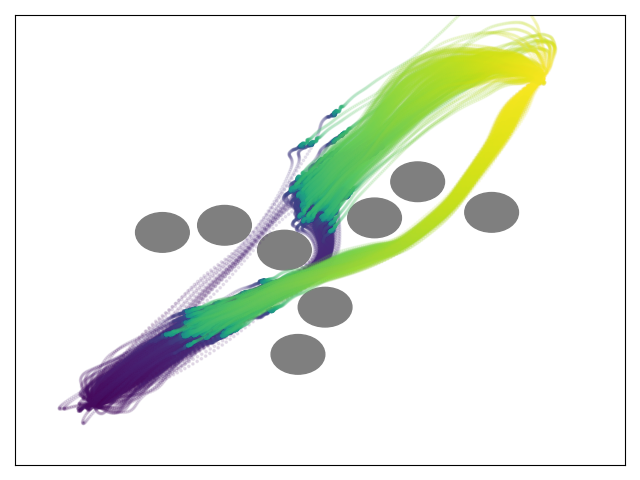}
    \includegraphics[width=0.32\linewidth]{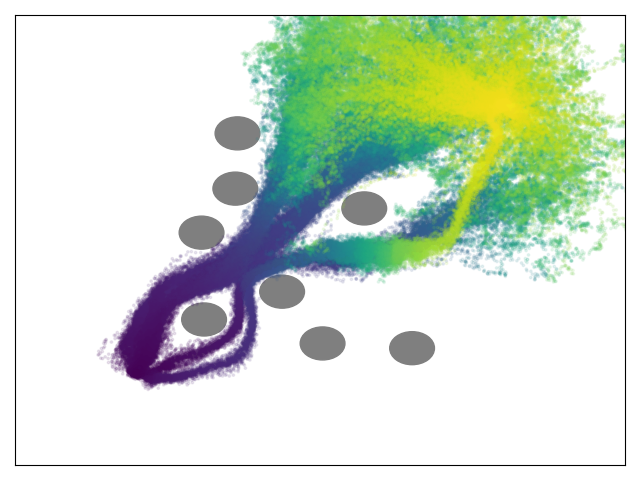}
    \includegraphics[width=0.32\linewidth]{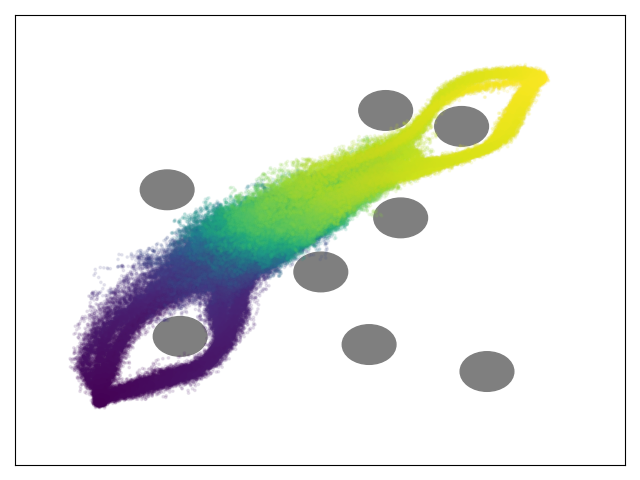}
    \includegraphics[width=0.32\linewidth]{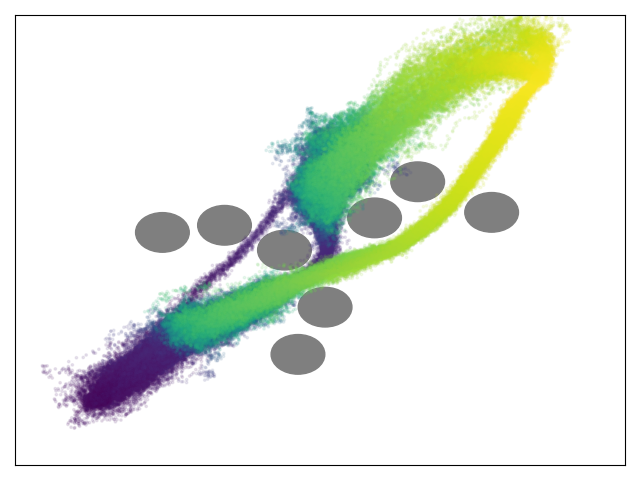}
    \caption{Transport paths of a trained factorized model on the motion planning task of different environments with randomly initialized circular obstacles. We train the model with a diffusion coefficient $g_t = 0.0$ and we sample the model via solving \cref{eq:sde} with $g_t = 0.0$ (first row) and $g_t = 0.5$ (second row). Note that in the case of $g_t = 0.0$, \cref{eq:sde} reduces to a deterministic ODE. }
    \label{fig:transport_mean_field}
\end{figure}

\section{Conclusion}

We have proposed a simulation-free framework for training continuous-time stochastic processes over a large range of objectives, by combining Neural Conservation Laws with likelihood-based models.
We demonstrated the flexibility and capacities of our method on various applications, including spatio-temporal generative modeling, learning optimal transport between arbritrary densities, and mean-field stochastic optimal control. 
Especially at low dimensional settings, our method easily outperforms existing methods. However, the reliance on likelihood-based models make it difficult to be scaled up to high dimensions. 
We acknowledge these limitations and leave them for future works. 

\section{Acknowledgements}

MH is supported on the Meta Platforms, Inc project entitled Probabilistic Deep Learning with Dynamical Systems. EVE is supported by the National Science Foundation under Awards DMR-1420073, DMS-2012510, and DMS-2134216, by the Simons Collaboration on Wave Turbulence, Grant No. 617006, and by a Vannevar Bush Faculty Fellowship.

\bibliography{uai2025}

\newpage

\onecolumn
\appendix

\title{Simulation-Free Differential Dynamics \\through Neural Conservation Laws\\(Supplementary Material)}
\maketitle

\section{Jacobian symmetrization loss}

With the flexible learnable component $\ft$, we can learn the Hodge decomposition of the velocity field to remove the extra divergence-free (i.e., rotational) components of the model. As a direct consequence of the the Benamou-Brenier formula \cite{villani2021topics,albergo2022building}, the velocity field that achieves the optimal transport has no divergence-free component. 

A necessary and sufficient condition for a velocity field of being a gradient field is having symmetric Jacobian matrix with respect to all spatial dimensions. By the Hodge decomposition, any time-dependent velocity field $v^t : \R^{D+1} \rightarrow \R$ can be expressed as a sum of a divergence-free field and a gradient field:
\begin{equation}
    v^t = \nabla \phi^t + \eta^t 
\end{equation}
where $\eta^t$ is divergence-free. Let $J^t: \R^{D+1} \rightarrow \R^{D\times D}$ be the Jacobian matrix of $v^t$ with respect to its spatial dimensions and we denote its $(i,j)$ entry by $J_{i,j}^t = \partial_{x_j} v^{t}_i$. Then, the Jacobian matrix is symmetric if and only if $\partial_{x_j} v^t_{i} = \partial_{x_i} v^t_{j}$ for any $i,j$. It follows that $\partial_{x_j} v^t_{i} = \partial_{x_i} v^t_{j}$. Consequently, $\int_{-\infty} v^t_i d x_i = \int_{-\infty} v^t_j d x_j$ for any $i,j$. Then let $\phi^t = \int_{-\infty} v^t_1 d x_1$ and we obtain $v^t = \nabla \phi^t$ is a gradient field. Conversely, if $v^t = \nabla \phi^t$ is a gradient field, then $\partial_{x_j} v^t_{i} = \partial_i \partial_j \phi^t = \partial_{x_i} v^t_{j}$ and $J_{i,j}^t$ is symmetric. 

Motivated by this observation of the equivalence between the symmetry of the Jacobian and the OT plan, we train $\ft$ with the loss
\begin{equation}
    L_{\text{OT}} = \E_{t, x\sim \rho_t (x)} || J_t^\theta (x) - (J_t^\theta)^T (x) ||_F
\end{equation}
where $||\cdot||_F$ denotes the Frobenius norm. We can compute this loss by using the Hutchinson trace estimator \citet{hutchinson1989stochastic}. Let $v \sim \mathcal{N}(0,I)$ be a random Gaussian vector and $u_t (x) = v^T J_t^\theta (x)$ and $w_t(x) = J_t^\theta (x) v$. Computing $u$ and $w$ requires one vector-Jacobian product (VJP) and one Jacobian-vector product (JVP), respectively.  Therefore,
\begin{equation}
\begin{aligned}
    L_{\text{OT}} & = \E_{t, x\sim \rho_t (x)} || J_t^\theta (x) - (J_t^\theta)^T (x) ||_F \\
    & = \E_{v\sim \mathcal{N}(0,I);t, x\sim \rho_t (x) } [v^T \left(J_t^\theta (x) - (J_t^\theta)^T (x)\right) \left(J_t^\theta (x) - (J_t^\theta)^T (x)\right) v] \\ & = \E_{v\sim \mathcal{N}(0,I);t, x\sim \rho_t (x) } [u^T_t(x)w_t(x) - 2w^T_t(x)w_t(x) + w^T_t(x)u_t(x)]
\end{aligned}
\end{equation}
So, the stochastic estimation of the loss takes only one VJP and one JVP at each sample, which is computationally feasible even in high dimensions.

While this could arguably be better than regularizing kinetic energy for finding kinetic optimal solutions, as it no longer requires an explicit trade-off between kinetic energy and the other cost functions, we did not observe a meaningful improvement over simply regularizing kinetic energy.
\newpage
\section{Proof of Lemma~\ref{lemma:div_free} and Theorem~\ref{theorem:no_flux}}
\label{app:proof_1}
\paragraph{Lemma~\ref{lemma:div_free}}
\begin{proof}
Given the definition of $b_t^\theta$ provided in \cref{eq:autoregressive_bt}, we directly compute $\partial_{x_i} \bt$ as follows:
\begin{equation}
\partial_{x_i} [\bt]_{i}(x) =
\begin{cases}
    & \sigma'(x_{i})\partial_t \left( \prod_{j=1}^{D-1} f^\theta_t(x_{j}| x_{1:j-1}) \right),  \quad \text{if } i = D\\
    & - \left( \prod_{j=2}^D \sigma'(x_{j}) \right) \partial_t  f^\theta_t(x_{1}),  \quad \text{if } i = 1\\
    & \left( \prod_{j=i+1}^D \sigma'(x_{{j}}) \right) \left(\sigma'(x_{i}) -  f^\theta_t(x_{i} | x_{1:i-1})\right)  \partial_t \left( \prod_{j=1}^{i-1} f^\theta_t(x_{j} | x_{1:j-1})\right) \\ & -\left( \prod_{j=i+1}^D\sigma'(x_{{j}}) \right)  \partial_t  f^\theta_t(x_{i} | x_{1:i-1})  \left( \prod_{j=1}^{i-1} f^\theta_t(x_{j} | x_{1:j-1} )\right), \quad  \text{otherwise}
\end{cases}
\end{equation}
Note that for the cases in which $i \in \{2,\dotsm, D-1\}$, we can further simplify $\partial_{x_i} [\bt]_{i}(x)$ as follows
\begin{equation}
    \partial_{x_i} \bt = \left(\prod_{j=i}^D\sigma'(x_{{j}})\right) \partial_t \left( \prod_{j=1}^{i-1} f^\theta_t(x_{j} | x_{1:j-1})\right) - \left(\prod_{j=i+1}^D\sigma'(x_{{j}}) \right)\partial_t \left( \prod_{j=1}^{i} f^\theta_t(x_{j} | x_{1:j-1})\right)
\end{equation}
Let
\begin{equation}
    m_i = \left(\prod_{j=i+1}^D\sigma'(x_{{j}}) \right)\partial_t \left( \prod_{j=1}^{i} f^\theta_t(x_{j} | x_{1:j-1})\right)
\end{equation}
Then,
\begin{equation}
    \partial_{x_i} [\bt]_{i}(x) = m_{i-1} - m_{i}
\end{equation}
with $\partial_{x_1} [\bt]_{1}(x)= -m_1$ and $\partial_{x_D} [\bt]_{D}(x)= m_{D-1}$.
Therefore, it follows immediately that 
\begin{equation}
    \nabla \cdot \bt (x) = \sum_{i=1}^D \partial_{x_i} [\bt]_{i}(x) = -m_1 + \sum_{i=2}^{D-1} m_{i-1} - m_{i} + m_{D-1} = 0
\end{equation}
\end{proof}

\paragraph{Theorem~\ref{theorem:no_flux}}
\begin{proof}
    By Lemma~\ref{lemma:div_free} and Lemma~\ref{lemma:continuity}, $\at$ constructed by \cref{eq:autoregressive_at} and $\bt$ constructed by~\cref{eq:autoregressive_bt} satisfies the continuity \cref{eq:continuity_eq}. Now, we compute the normalization constant of the resulting density $\rho_t^\theta = \nabla \cdot a_t^\theta$ as follows
    \begin{equation}
        \int_{\R^D} \rho_t^\theta dx = \int_{\R^D} \nabla \cdot a_t^\theta dx = \int_{\R^D}  \prod_{i=1}^D f_t^\theta (x_i|x_{1:i-1}) dx = 1
    \end{equation}

Therefore, the density $\rho_t^\theta$ is properly normalized. Now, we check for the spurious fluxes for all dimensions by investigating the following limit:
\begin{equation}
    \lim_{x \rightarrow \infty} \left|j^\theta_t\right| (x) = \lim_{x \rightarrow \infty} \left|\partial_t \at - \bt \right| \leq \sum_{i = 1} \lim_{x \rightarrow \infty} \left|\partial_t [\at]_i - [\bt]_i \right|
\end{equation}
Hence, it suffices to prove 
\begin{equation}
    \lim_{x \rightarrow \infty} \left|\partial_t [\at]_i - [\bt]_i \right| = 0
\end{equation}
for all $i \in \{1,\dotsm,D\}$ to show that there is no spurious flux in our model. 

For $i = D$, by \cref{eq:spurious_flux_autoregressive} and \cref{eq:autoregressive_bt}, we have 
\begin{equation}
\begin{aligned}
    \lim_{x \rightarrow \infty} \left|\partial_t [\at]_i - [\bt]_i \right| (x) & = \lim_{x_{1:D-1} \rightarrow \infty} \lim_{x_D \rightarrow \infty}  \left|\partial_t [\at]_i - [\bt]_i \right| (x) \\ & = 
    \lim_{x_{1:D-1} \rightarrow \infty} \lim_{x_D \rightarrow \infty} | - \partial_t \left( \prod_{i=1}^{D-1} f_t^\theta(x_{i}| x_{1:i-1}) \right) + \sigma(x_{D})\partial_t \left( \prod_{j=1}^{D-1} f^\theta_t(x_{j}| x_{1:j-1}) \right) | \\ & = \lim_{x_{1:D-1} \rightarrow \infty} \lim_{x_D \rightarrow \infty} | \partial_t \left( \prod_{i=1}^{D-1} f_t^\theta(x_{i}| x_{1:i-1}) \right) \left(\sigma (x_D) - 1\right) | \\ & = 
    \lim_{x_{1:D-1} \rightarrow \infty} |\partial_t \left( \prod_{i=1}^{D-1} f_t^\theta(x_{i}| x_{1:i-1}) \right) |\lim_{x_D \rightarrow \infty} |\left(\sigma (x_D) - 1\right) | \\ & = 0
\end{aligned} 
\end{equation}

For $i = 1$, since $\lim_{x \rightarrow \infty} F^\theta_t(x_{1}) = 1$ for all $t \geq 0$, $\lim_{x \rightarrow \infty} \partial_t F^\theta_t(x_{1}) = 0$. Therefore, for $i = 1$, 
\begin{equation}
\begin{aligned}
    \lim_{x \rightarrow \infty} \left|\partial_t [\at]_i - [\bt]_i \right| = \lim_{x \rightarrow \infty}  - \left( \prod_{j=2}^D \sigma'(x_{j}) \right) \partial_t  F^\theta_t(x_{1}) = 0
\end{aligned}
\end{equation}

For $i \in \{2,\dotsm, D-1\}$, 
\begin{equation}
\begin{aligned}
    \lim_{x \rightarrow \infty} \left|\partial_t [\at]_i - [\bt]_i \right| = \left(\prod_{j=i+1}^D\sigma'(x_{{j}}) \right)\partial_t \left( \prod_{j=1}^{i-1} f^\theta_t(x_{j} | x_{1:j-1})\right) \left(\sigma (x_i) - \partial_t F^\theta_t(x_{i} | x_{1:i-1})\right) = 0
\end{aligned}
\end{equation}

Hence, we deduce that there is no spurious flux given our construction of $\at$ and $\bt$. 

\end{proof}

\section{Proof of Proposition~\ref{prop:mixture}}

\label{app:prop1_proof}

\begin{proof} We check that $\rho_t$ and $u_t$ satisfy \cref{eq:fp_eq}:
\begin{equation}
\begin{split}
    \partial_t \rho_t & = \sum_{k=1}^K \gamma^k \left( \partial_t \rho_t^k \right) 
    = \sum_{k=1}^K \gamma^k 
    \left( - \nabla \cdot (u_t^k \rho_t^k) + \tfrac12\diff^2 \Delta \rho_t^k \right) \\
   &  = - \nabla \cdot \sum_{k=1}^K \gamma^k (u_t^k \rho_t^k) + \tfrac12 \diff^2 \Delta \sum_{k=1}^K \gamma^k \rho_t^k  \\
    &= - \nabla \cdot \left( \sum_{k=1}^K \frac{\gamma^k \rho_t^k}{\rho_t} u_t^k \right) \rho_t + \tfrac12 \diff^2 \Delta \sum_{k=1}^K \gamma^k \rho_t^k 
    = - \nabla \cdot u_t \rho_t + \tfrac12 \diff^2 \Delta \rho_t
\end{split}
\end{equation}
\end{proof}

\section{Experimental Setup}\label{app:experimental_setup}

\paragraph{Neural Network Architecture}
For training the autoregressive model, we use the MADE architecture \citep{germain2015made} with sinusoidal time embeddings of width $128$ \citep{tancik2020fourfeat}. For the neural networks we use to parameterize the mean and the scale of both the autoregressive model and the factorized model, we pass the input first into the sinusoidal time embeddings before feeding into a four-layer MLP of hidden dimension $256$ on each layer. 
\paragraph{Training Details}
For all the numerical experiments we present, we use a learning rate of $3e-4$ with the Adam optimizer \citep{kingma2014adam} and a cosine annealing learning rate scheduler. 
\paragraph{Spatio-temporal Generative Modeling}
The total number of iterations we run for the experiments are generally $10^3$ epochs with a batch size of $256$. We found
that the training is stable with a simple four-layer MLP parametrization for the mean and the scale of the mixtures of factorized logistics. Also, the MLP parameterization along with the mixture combinations in the factorzied model turned out to be expressive enough for the experiments we have explored. 
\paragraph{Learning To Transport With Optimality Conditions}

For the single-cell RNA sequence dataset used in \citep{moon2019visualizing}, we find that both the factorized model and the autoregressive model will easily get overfitted if we use more than $64$ modes in the mixture. For the numerical results we are reporting, we use mixtures of size $L=16$ for each of the coordinates in the autoregressive model (\eqref{eq:autoregressive_mixture}), and we use a mixture of size $K=32$ for the factorized model (\eqref{eq:mixture_rho}). Also, we find that having the term $\int_{t_0}^{t_n} \E_{x \sim \rho_t(x)} \left[ \norm{u_t(x)}^2 \right] dt$ in the loss objective is extremely helpful for both finding the kinetic optimal paths and preventing overfitting. 

\paragraph{Mean-field Stochastic Optimal Control} 

To achieve consistent results for this experiment, we train the objective function $L_{\text{SOC}}$ by gradually introducing different terms in it. We first train the log-likelihood term $\E_{x_0\sim q_0} [-\log \rho_0 (x_0)] + \E_{x_1\sim q_1} [-\log \rho_1 (x_1)]$ for $10^3$ iterations with a batch size of $512$. Then, we introduce the term $\E_{x\sim \rho_t} \left[ \frac{1}{2\sigma_t^2}\norm{u_t(x) - v_t(x)}^2 \right] dt$ for another $10^3$ iterations. Finally, we introduce the running cost $\int_{0}^{1} \phi_t (\rho_t) dt$ and train for $2\times 10^4$ iterations. This training technique helps stabilize the training.

\end{document}